\theoremstyle{plain}
\newtheorem{theorem}{Theorem}[section]
\theoremstyle{definition}
\newtheorem{definition}[theorem]{Definition}
\theoremstyle{remark}
\def\trans{^{\mbox{\tiny{\sf T}}}}
\def\Span{\mathrm{Span}}
\newcommand{\indep}{\protect\mathpalette{\protect\independenT}{\perp}}
\def\independenT#1#2{\mathrel{\rlap{$#1#2$}\mkern2mu{#1#2}}}
\newcommand{\bA}{\mathbf{A}}
\newcommand{\bB}{\mathbf{B}}
\newcommand{\bU}{U}
\newcommand{\bX}{X}
\newcommand{\bx}{\mathbf{x}}
\newcommand{\sumi}{\sum_{i=1}^{n}}
\DeclareMathOperator*{\argmax}{argmax}
\DeclareMathOperator*{\argmin}{argmin}
\newcommand{\rev}[1]{{#1}}
\icmltitlerunning{Enhancing Sufficient Dimension Reduction via Hellinger Correlation}
\begin{document}

\twocolumn[
\icmltitle{Enhancing Sufficient Dimension Reduction via Hellinger Correlation}



\icmlsetsymbol{equal}{*}

\begin{icmlauthorlist}
\icmlauthor{Seungbeom Hong}{sch}
\icmlauthor{Ilmun Kim}{sch2}
\icmlauthor{Jun Song}{sch}
\end{icmlauthorlist}

\icmlaffiliation{sch}{Department of Statistics, Korea University, Seoul, South Korea}
\icmlaffiliation{sch2}{Department of Applied Statistics, Yonsei University, Seoul, South Korea}

\icmlcorrespondingauthor{Jun Song}{junsong@korea.ac.kr}

\icmlkeywords{Sufficient Dimension Reduction, Dependence Measure, Hellinger Correlation, f-divergence}

\vskip 0.3in
]



\printAffiliationsAndNotice{} 

\begin{abstract}
In this work, we develop a new theory and method for sufficient dimension reduction (SDR) in single-index models, where SDR is a sub-field of supervised dimension reduction based on conditional independence. Our work is primarily motivated by the recent introduction of the Hellinger correlation as a dependency measure. Utilizing this measure, we develop a method capable of effectively detecting the dimension reduction subspace, complete with theoretical justification. Through extensive numerical experiments, we demonstrate that our proposed method significantly enhances and outperforms existing SDR methods. This improvement is largely attributed to our proposed method's deeper understanding of data dependencies and the refinement of existing SDR techniques.
\end{abstract}
\section{Introduction}
\label{SE:Intro}
In the age of big data, the advent of high-dimensional datasets has transformed the landscape of statistical analysis and machine learning.
However, as the number of features in a dataset increases, so does the complexity of modeling and interpretation. 
This phenomenon, often referred to as the curse of dimensionality, poses a significant challenge to researchers and practitioners who seek to extract meaningful information and insights from vast and intricate data structures.

In response to this challenge, the field of sufficient dimension reduction (SDR) has emerged as a powerful approach to navigating high-dimensional space and uncovering the underlying structure without compromising interpretability. Much like how sufficient statistics provide essential information for estimation, sufficient dimension reduction methods furnish us with a subspace that contains adequate information to accurately estimate or explain the response variable. This approach is rooted in the idea that by identifying and preserving the key relationships in the notion of conditional independence as follows. For a univariate random variable $Y$ and a $p$-dimensional random vector $\bX$, the objective of linear sufficient dimension reduction is to seek out a matrix $\bB \in \mathbb{R}^{p\times d} \ (d < p)$ that possesses the smallest achievable column space such that 
\begin{equation}\label{def : linear SDR}
    Y \indep  \bX   \,|\,   \bB\trans \bX.
\end{equation}

It is important to note that the conditional independence does not change when $\bB$ is multiplied by any non-singular matrix $\bA \in \mathbb{R}^{d \times d}$. Thus, to make the target identifiable, the parameter needed to seek is the space spanned by the columns of $\bB$, i.e, $\Span({\bB)}$, not a matrix $\bB$ itself. The column space of $\bB$ with the smallest $d$ is called the \textit{central space} denoted by $\mathscr{S}_{Y|\bX}$ and the dimension of the central space is the structural dimension, say $d$. 

{The conditional independence \eqref{def : linear SDR} can also be represented as 
$$
Y \indep X \,|\, R(X),
$$
for a proper linear mapping, $R:\mathbb{R}^p \rightarrow \mathbb{R}^d$. It can be further represented as
$$
Y \,|\, X \sim Y \,|\, R(X),
$$
where $\sim$ means equal in distribution, which tells us that once $R(X)$ is identified, no more information about $Y$ can be obtained from $X$, and all the regression information for the predictor is preserved through $R(X)$. Another equivalent statement is 
$$ X \,|\, Y, R(X) \sim X \,|\, R(X). $$ Consider rewriting that
$X$ represents data $D$ and $Y$ symbolizes a parameter $\theta$. Under this reinterpretation, the above statement is equivalent to $D \,|\, (\theta,R) \sim D \,|\, R$, which suggests that $R$ acts as a sufficient statistic. Hence, the SDR mapping $R(\cdot)$ aligns with the traditional concept of statistical sufficiency. A key distinction, however, lies in the nature of the sufficient statistic versus the SDR: while a sufficient statistic is observable, the SDR involves parameters that require estimation. \citet{adragni2009sufficient} explains this conceptual idea of sufficiency more rigorously.}

If we have a proper $\bB$ satisfying conditional independence as in (\ref{def : linear SDR}), then the response $Y$ can be represented as
\begin{equation}\label{continuous Y SDR}
    Y = g(b_1\trans \bX, \ \dotsc, \ b_d\trans \bX, \varepsilon),
\end{equation}
where $g : \mathbb{R}^{p + 1} \mapsto \mathbb{R}$ is an unknown measurable function and $\varepsilon$ is independent of $\rev{\bX}$ with mean zero, and $b_1,\cdots,b_d$ are the columns of $\bB$. 
As there is no strong prerequisite for the function $g$, dimension reduction can be performed without relying on a specific model.

\rev{Various SDR methods have been proposed and successfully applied in diverse disciplines such as bioinformatics \citep{chiaromonte2002dimension,Hsueh2016}, finance \citep{wang2023quantitative}, marketing \citep{naik2000new} and ecology \citep{roley2008predicting}. The two most dominating approaches for SDR are inverse regression and forward regression approach.} The inverse method requires an additional assumption on the conditional distribution of the predictors given the response, $\bX \, | \, Y$. 
The well-known techniques for the inverse method include sliced inverse regression (SIR) \cite{Li1991SlicedReduction}, sliced average variance estimation (SAVE) \cite{cookSlicedInverseRegression1991}, and directional regression (DR) \cite{Li2007OnReduction}. The forward method requires less assumptions on the conditional distribution but has additional assumptions on the link function. 
The minimum average variance estimation (MAVE) \cite{Xia2002AnSpace} is a widely recognized approach to the forward method. We refer to \citet{Li2018Sufficient} for a detailed explanation. 
Although the prior methods have proven effective, their success relies on specific assumptions mentioned earlier: inverse regression methods impose strong assumptions on the conditional/marginal distribution of $X$ such as the predictors $X$ following an elliptical distribution, while forward-regression approaches require an assumption on the link function $g(\cdot)$. All these assumptions are difficult to verify with the dataset.  

On the other hand, there are other directions of SDR studies, which build on measures of statistical dependence. Quantifying the dependence between two random objects has been a central topic in statistics.
Some notable examples of dependence measures, especially in nonparametric settings, include the distance covariance \citep{Szekely2007MeasuringDistances, Szekely2009BrownianCovariance}, the Hilbert--Schmidt independence criterion (HSIC) \citep {Gretton2005MeasuringNorms} and the ball covariance \citep{Pan2020BallSpace}. The SDR subspace can be found by maximizing the dependence between the response $Y$ and predictors $\bX$. \citet{Sheng2013DirectionCovariance} showed that sufficient dimension reduction can be achieved for a single-index model by the distance covariance. \citet{Sheng2016SufficientCovariance} extended this method to a general structural dimension $d$. Similarly, \citet{Zhang2015DirectionCriterion} proposed a way to utilize the HSIC for the single index model, and \citet{Xue2017SufficientCriterion} extended it to a general dimension $d$. In addition, \citet{Zhang2019RobustCovariance} analyzed single and multi-index models based on the ball covariance. See \citet{Dong2021AOptimization} for a comprehensive review.

While existing SDR methods based on dependence measures have shown their effectiveness, they are not free from limitations. 
One notable limitation is in the interpretation of the dependence measure itself: while a zero value indicates independence between two random variables, a larger measure does not necessarily imply a stronger relationship between them. This gap can lead to potential misinterpretations when assessing the strength of these relationships. Moreover, these methods encounter challenges regarding their theoretical foundation. \rev{Some dependence measure-based SDR methods either lack comprehensive theoretical validation or rely on specific independence assumptions that are difficult to verify in practical applications. It raises concerns about their reliability.}

\rev{To address these issues, this article introduces a new SDR method using Hellinger correlation, which improves the accuracy of estimating the central subspace while achieving its theoretical justification with weaker assumptions than existing methods. More precisely, compared to other dependence measures, the Hellinger correlation is more adept at capturing the strength of various relationships and satisfies the natural axioms for dependence measures \citep{Geenens2022TheCorrelation}. Equipped with the benefits of the Hellinger correlation, our SDR approach is straightforward to implement and consistently delivers significant improvements over existing methods in almost all cases considered. Moreover, we distinguish the proposed method from existing approaches by establishing a theoretical foundation, including consistency guarantees with minimal assumptions.


To establish a solid foundation for a new SDR method based on a dependence measure, we focus on the single-index model where the target dimension is one. Although extending our method to multi-index models is feasible and promising, as discussed in Section \ref{SE : Discuss} and other work \citep{Sheng2016SufficientCovariance,christou2020central}, we have opted to prioritize the foundational principles of SDR by concentrating on the simple yet fundamental single-indexed model. This approach aims to ensure clarity and maintain the robustness of the article, while laying a strong groundwork for future expansions. 
}


The remainder of this article is organized as follows. Section \ref{SE:Back and Moti} provides background information and motivation for this paper. Section \ref{SE : TR} delves into the optimization methods and presents the theoretical results. In Section \ref{SE : NR}, we provide simulation results that compare our method with existing ones. Section \ref{sec: real data} presents a real data application of the method. Section \ref{SE : Discuss} summarizes our contributions and discusses several directions for future work. Lastly, the appendix includes additional simulation results. The code that implements our proposed method is available at \href{https://github.com/JSongLab/SDR_HC}{https://github.com/JSongLab/SDR\_HC}.

\section{Background and Motivation}
\label{SE:Back and Moti}



\subsection{SDR through dependence measure}

As mentioned in Section \ref{SE:Intro}, there have been studies focused on developing SDR methods that utilize dependence measures. 
To explain the idea, assume that the structural dimension is known as $d$. 
Let $\rev{\boldsymbol{\eta}_0}$ be the basis of the central subspace and $\rev{\boldsymbol{\eta}_1}$ be the basis of the orthogonal complement of the central subspace. 
In other words, $\rev{\boldsymbol{\eta}} = (\rev{\boldsymbol{\eta}_0},  \rev{\boldsymbol{\eta}_1})^\top$ is a basis of $\mathbb R^p$. 
Let $\rho: \mathbb R^d \times \mathbb {R} \to [0, \infty)$ be a generic dependence measure between two random quantities. A SDR method based on $\rho$ seeks to find the central space $\mathscr{S}_{Y|\bX}$ by solving the following optimization problem: 
\begin{equation*}
    \bB_0 = \argmax_{\bB \in \mathbb{R}^{p\times d}} \rho(\bB\trans\bX,Y) \quad \text{   subject to } \quad  \bB\trans \Sigma_{\bX} \bB = I_d 
\end{equation*}
where $\Sigma_{\bX}$ is the covariance matrix of $\bX \in \mathbb{R}^p$. 
To conclude that the maximizer recovers the central space, i.e., $\Span(\bB_0) = \Span(\rev{\boldsymbol{\eta}_0})$, 
the previous approaches impose an additional independence assumption on projected random variables, namely
\begin{equation*}
    \rev{\boldsymbol{\eta}_0}\trans \bX \ \indep \ \rev{\boldsymbol{\eta}_1}\trans \bX,
\end{equation*} 
which is not easily verifiable in practical applications. In contrast, we aim to remove this additional restriction and propose a more reliable SDR method.

\subsection{Copula}
We next briefly discuss copulas. Copulas are essential tools in high-dimensional analysis, enabling us to estimate random vectors through the estimation of marginal distributions. This tool has applications across various fields, with finance being a prime example of its extensive usage \citep{cherubini2004copula}. Assume that there exists a continuous random vector $\rev{\bX} = (X_1, X_2,\dotsc, X_p)\trans$. Let $U_i =  F_i(X_i)$ where $F_i$ is the cumulative distribution function of $X_i$. 
By the integral probability transform, all $U_i$ are uniform random variables on the interval $[0, 1]$. 
The copula $C$ is defined as 
\begin{equation}\label{def : copula}
    C(u_1,\dotsc,u_p) = P(U_1 \le u_1,\dotsc, U_p \le u_p).  
\end{equation}
In other words, the copula $C$ is the joint cumulative distribution function of a random vector in the unit cube, where each marginal is a uniform random variable.

\citet{Sklar1959FonctionsMarges} explains that a copula is an adequate tool for understanding the distribution of a random vector, and establishes the following result:
\begin{theorem}[Sklar, 1959]
    Let $\rev{\bX} = (X_1, X_2,\dotsc, X_p)\trans$ be a random vector. Suppose that $F$ and $f$ are the joint cumulative distribution function and the joint probability density function of $\rev{\bX}$. Then, there exists a function $C : [0, 1]^p \to [0, 1]$, called the copula of $\rev{\bX}$, such that 
    \begin{equation*}
        F(x_1,x_2,\dotsc,x_p) = C(F_1(x_1),F_2(x_2,),\dotsc,F_p(x_p)).
    \end{equation*}
    Additionally, there exists a function $c : [0, 1]^\rev{p} \to [0, \infty)$, called the copula density of $\rev{\bX}$, such that 
    \begin{equation*}
        \begin{split}
            f(x_1,x_2,\dotsc,x_p) &= c(F_1(x_1),F_2(x_2,),\dotsc,F_p(x_p)) 
        \\&\times f_1(x_1)f_2(x_2) \dotsb f_p(x_p).    
        \end{split}
    \end{equation*}
\end{theorem}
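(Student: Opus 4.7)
The plan is to construct the copula explicitly through the probability integral transform and then recover the density decomposition by differentiation. Set $U_i = F_i(X_i)$ and let
$$C(u_1, \ldots, u_p) := P(U_1 \leq u_1, \ldots, U_p \leq u_p)$$
as in (\ref{def : copula}). Because $\bX$ admits a joint density $f$, every marginal CDF $F_i$ is continuous, so the probability integral transform gives $U_i \sim \mathrm{Uniform}[0,1]$ for each $i$, making $C$ a valid joint CDF on $[0,1]^p$ with uniform marginals. This candidate is fully determined by the joint law of $\bX$, so no extra assumptions are imposed.

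For the CDF identity, I would start by writing
$$C(F_1(x_1), \ldots, F_p(x_p)) = P\bigl(F_1(X_1) \leq F_1(x_1), \ldots, F_p(X_p) \leq F_p(x_p)\bigr),$$
and then argue that, because each $F_i$ is continuous and non-decreasing, the event $\{F_i(X_i) \leq F_i(x_i)\}$ differs from $\{X_i \leq x_i\}$ only on a set where $F_i$ is flat at the value $F_i(x_i)$, which carries zero probability under $F_i$. Intersecting over $i$ yields $F(x_1, \ldots, x_p)$, proving the first claim. The density statement then follows by differentiating both sides with respect to $x_1, \ldots, x_p$ in sequence; the chain rule produces
$$f(x_1, \ldots, x_p) = \frac{\partial^p C}{\partial u_1 \cdots \partial u_p}\bigl(F_1(x_1), \ldots, F_p(x_p)\bigr) \prod_{i=1}^p f_i(x_i),$$
so defining $c := \partial^p C / (\partial u_1 \cdots \partial u_p)$ delivers the claimed factorization, with $c \geq 0$ since $C$ is a CDF.

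The main technical obstacle is handling points where some $f_i(x_i) = 0$: there $F_i$ is locally flat, so the map $(x_1, \ldots, x_p) \mapsto (F_1(x_1), \ldots, F_p(x_p))$ fails to be a local diffeomorphism and the mixed partial of $C$ need not exist pointwise. I would circumvent this by first carrying out the change of variables $u_i = F_i(x_i)$ on the open set $\{x : \prod_i f_i(x_i) > 0\}$, where the Jacobian is strictly positive, and then extending $c$ to the rest of the cube arbitrarily (for instance, by continuity, or via the generalized inverse $F_i^{-1}(u) = \inf\{x : F_i(x) \geq u\}$), since the complement carries zero mass under the joint law of $(U_1, \ldots, U_p)$. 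Uniqueness of $C$ in the fully general (non-continuous-marginal) setting is the delicate part of Sklar's theorem, but the statement here only claims existence for a $\bX$ possessing a density, so this extension issue does not need to be addressed.
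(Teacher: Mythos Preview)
The paper does not prove this statement: Sklar's theorem is quoted as classical background with a citation to \citet{Sklar1959FonctionsMarges} and no proof is supplied. There is therefore nothing in the paper to compare your argument against.

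That said, your sketch is the standard constructive proof in the continuous-marginal case and is essentially correct. Defining $C$ via the probability integral transform, verifying $C(F_1(x_1),\dotsc,F_p(x_p))=F(x_1,\dotsc,x_p)$ by a null-set argument, and then differentiating with the chain rule is exactly how one usually obtains both the copula and the copula density when a joint density exists. Your caution about points where $f_i(x_i)=0$ is appropriate, and your proposed fix---work on the set where the Jacobian is positive and extend arbitrarily on the complement, which has zero mass under the law of $(U_1,\dotsc,U_p)$---is the right way to handle it. The only minor refinement worth noting is that the mixed partial $\partial^p C/(\partial u_1\cdots\partial u_p)$ exists Lebesgue-almost-everywhere on $[0,1]^p$ because $C$ is absolutely continuous there (it is the CDF of an absolutely continuous random vector on the cube), so you need not appeal to pointwise differentiability at all; invoking the Radon--Nikodym derivative of the law of $(U_1,\dotsc,U_p)$ with respect to Lebesgue measure is cleaner and automatically gives $c\ge 0$ a.e.
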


For a random variable $Y$, we have $F_Y(y) = C(F_Y(y))$ and $f_Y(y) = c(F_Y(y))f_Y(y)$. 
Thus, the univariate random variable copula is the identity function and the density of the copula is 1 in the unit interval [0, 1]. 
This property will be used later in the construction of the proposed method.

Since the cumulative distribution function of a random variable is a monotonic function, a copula is invariant to the monotonic transformation of its marginals. 
This property is powerful in determining the dependence between random variables.

\subsection{f-divergence}

The $f$-divergence is a function that measures the difference between two distributions $P$ and $Q$ given as 
\begin{equation}\label{def of f-divergence}
    \begin{split}
        D_\varphi(P\|Q) =  \int \varphi\left(\frac{dP}{dQ}\right)dQ,    
    \end{split}
\end{equation}
where $\varphi : (0, \infty) \to \mathbb R$ is convex and $\varphi(1) = 0$.
The $f$-divergence family encompasses a wide range of statistical distances between distributions. Some notable examples include the Kullback--Leibler divergence with $\varphi(t) = t\log t$, the squared Hellinger distance with $\varphi(t) = (\sqrt{t} -1)^2$ and the total variation distance with $\varphi(t) = |t - 1|/2$.

If $\varphi$ is strictly convex, $P$ and $Q$ are identical distributions if and only if $D_\varphi(P\|Q) = 0$. 
This characteristic property allows us to build upon the $f$-divergence to test the independence between two random vectors. More formally, the hypotheses for independence testing are given as
\begin{equation*}
    H_0 : F_{XY} = F_XF_Y \quad \text{versus} \quad  H_1 : F_{XY} \ne F_XF_Y,
\end{equation*}
where $F_{XY}$ is the joint distribution of $(X,Y)$ and $F_XF_Y$ is the product of the marginal distributions. 
The characteristic property of the $f$-divergence implies that the above hypotheses can be equivalently written as $H_0: D_\varphi(F_{XY}\|F_XF_Y) =0$ versus $H_1: D_\varphi(F_{XY}\|F_XF_Y) \neq 0$. Hence one can use an estimator of
\begin{equation*}
    D_\varphi(F_{XY}\|F_XF_Y) = \iint \varphi\left(\frac{dF_{XY}}{dF_XdF_Y}\right)dF_XdF_Y
\end{equation*}
as a test statistic for independence testing. 
If both $F_X$ and $F_Y$ are absolutely continuous, then $D_\varphi$ can be represented in terms of density functions as follows:
\begin{equation}\label{f-div by pdf}
    D_\varphi(F_{XY}\|F_XF_Y) = \iint f_Xf_Y\varphi\left(\frac{f_{XY}}{f_Xf_Y}\right)dxdy. 
\end{equation}
\citet{Kinney2014EquitabilityCoefficient} proved that any measure of dependence constructed by $f$-divergence \eqref{f-div by pdf} holds data processing inequality. That is, $D(X, Z) \le D(Y, Z)$ where $X\indep Z \, | \, Y$.

\subsection{Hellinger Correlation}

\citet{Geenens2022TheCorrelation} proposed the Hellinger correlation as a tool for capturing the dependence between a pair of random variables. 
As implied by its name, it is formulated based on the squared Hellinger distance, which is an example of the $f$-divergence. 
More specifically, the squared Hellinger distance between $F_{XY}$ and $F_XF_Y$ is given as
\begin{equation}
\begin{split}
    \mathcal H^2(X, Y) &= \frac{1}{2}\iint_{\mathbb R^2}
    \left(\sqrt\frac{dF_{XY}}{dF_XdF_Y} - 1\right)^2dF_XdF_Y\\
    &=\frac{1}{2}\iint_{\mathcal I^2} (\sqrt{c_{XY}(u_x, u_y)} - 1)^2du_xdu_y \\
    &= 1 - \iint_{\mathcal I^2} \sqrt{c_{XY}(u_x, u_y)}du_xdu_y \\ 
    &:= 1 - \mathcal{B}(X, Y), \label{def:bhattacharyya}
\end{split}
\end{equation}
where $\mathcal I^2$ denotes the unit square $[0,1]^2$. In the above equations, $c_{XY}$ denotes the joint copula density of $U_X$ and $U_Y$ where $U_X$ and $U_Y$ denote the cumulative distribution function of $X$ and $Y$, respectively.
The quantity $\mathcal{B}(X, Y)$ in the last line is referred to as the Bhattacharyya affinity coefficient \citep{bhattacharyya1943measure} between the copula densities. 
From now on, we will write $\mathcal B(X,Y)$ as $\mathcal B$ for simplicity.

To motivate the Hellinger correlation, consider a bivariate normal random vector $(X, Y) \sim N((0,0) , (\begin{smallmatrix}
     1 & \rho \\ \rho & 1
 \end{smallmatrix}))$. As dicsussed in \citet{Geenens2022TheCorrelation}, the squared Hellinger distance between $X$ and $Y$ has an explicit expression as
\begin{equation*}
    \mathcal H^2(X, Y) = 1 - (2(1-\rho^2)^{1/4}) / (4 - \rho^2)^{1/2}
\end{equation*}
or $\mathcal B = (2(1-\rho^2)^{1/4}) / (4 - \rho^2)^{1/2}$ in terms of the Bhattacharyya affinity coefficient. As a result, the correlation parameter $\rho$ can be written as 
\begin{equation*}
    |\rho| = \frac{2}{\mathcal B^2} \{\mathcal B^4 + (4 - 3\mathcal B^4)^{1/2} - 2\}^{1/2}. 
\end{equation*}
This relationship leads to the Hellinger correlation between random variables $X$ and $Y$ defined as follows.
\rev{
\begin{definition} \label{def:hellinger correlation} Let $\mathcal{B}$ denote the Bhattacharyya affinity coefficient for $(X,Y)$ defined in \eqref{def:bhattacharyya}. The Hellinger correlation between $X$ and $Y$ is defined as 
\begin{equation}\label{HCor definition}
    H(X, Y) = \frac{2}{\mathcal B^2} \{\mathcal B^4 + (4 - 3\mathcal B^4)^{1/2} - 2\}^{1/2}. 
\end{equation}
\end{definition}
}
By construction, the Hellinger correlation $H(X,Y)$ coincides with the Pearson correlation when $(X,Y)$ follows a joint normal distribution, whereas they can differ significantly for non-normal distributions. It is worth noting that the properties of $f$-divergence and copula are preserved in the Hellinger correlation as the map $h:[0,1] \to [0,1]$, given as $h(x) = 2x^{-2} \{x^4 + (4 - 3x^4)^{1/2} - 2\}^{1/2}$, is a bijection.

The Hellinger correlation has several attractive properties, worth highlighting. First of all, like distance correlation, the Hellinger correlation fully characterizes independence, i.e., $H(X, Y) = 0$ if and only if $X$ and $Y$ are independent. However, unlike distance correlation, the Hellinger correlation does not depend on any moment conditions. Moreover, it is properly normalized as $0 \leq H(X,Y) \leq 1$, and equals one when $X$ and $Y$ are deterministically predictable from each other. See (P6) in \citet{Geenens2022TheCorrelation} for a more precise statement. The latter property is in sharp contrast to other popular measures such as Pearson's correlation, distance correlation and rank-based correlations. In particular, Pearson's correlation and distance correlation are 1 if a random variable is an affine transformation of the other variable. 
Additionally, rank-based measures such as Spearman's $\rho$, Kendall's $\tau$, and Hoeffeding's $D$ are 1 if two random variables have a monotonic deterministic relationship. 
More fundamentally, the Hellinger correlation takes 1 if and only if there exists a Borel function $\Phi : [0,1] \to \mathbb R^2$ such that $(X, Y) = \Phi(U)$ where $U$ is a uniform random variable in the interval $[0,1]$. 
Another important property of the Hellinger correlation is that it is invariant to any monotonic transformations. This means that for any two strictly monotonic functions $\psi_1, \psi_2$, the following relationship holds
\begin{equation*}
    H(\psi_1(X), \psi_2(Y)) = H(X, Y).
\end{equation*} 
This invariance property has been highlighted as a fundamental property of any valid dependence measure. We refer the reader to \citet{Geenens2022TheCorrelation} for further discussion on the properties of the Hellinger correlation. We also point out that the Hellinger correlation tends to be more sensitive to non-linear and realistic dependence than other popular dependence measures as illustrated in the simulation section in \citet{Geenens2022TheCorrelation}. 


Definition \ref{def:hellinger correlation} indicates that estimating $\mathcal B$ is sufficient for estimating $H(X, Y)$.
\citet{Geenens2022TheCorrelation} introduce an estimator of $\mathcal B$ based on the estimator of \citet{Leonenko2008ADensities}. To explain, let $\{(X_1, Y_1),\dotsc, (X_n, Y_n)\}$ be a random sample of size $n$. Let $\rev{\bU_i} = (U_{X_i}, U_{Y_i}) = (F_X(X_i), F_Y(Y_i))$. Under the continuity assumption for $F_X$ and $F_Y$, it is clear that $U_{X_i}$ and $U_{Y_i}$ are uniform random variables. Let $\rev{\hat{\bU}_i}$, the sample version of $\rev{\bU_i}$, be $(\hat F_X(X_i), \hat F_Y(Y_i))$ where $\hat F_X(u) := (1 / (n + 1)) \sum_{i=1}^n I_{\{X_i \le u\}} $ and $\hat F_Y(u) := (1 / (n + 1)) \sum_{i=1}^n I_{\{Y_i \le u\}}$. Let $R_i = \min_{j \ne i}\|\rev{\bU_j} - \rev{\bU_i}\|_2$ and $\hat R_i = \min_{j \ne i}\|\rev{\hat{\bU}_j} - \rev{\hat{\bU}_i}\|_2$. Then the final estimator of $\mathcal B$ suggested by \citet{Geenens2022TheCorrelation} is 
\begin{equation}\label{estimator for B}
    \hat{\mathcal{B}}_n = \frac{2\sqrt{n - 1}}{n} \sumi \hat R_i,
\end{equation}
and the corresponding estimator of the Hellinger correlation is 
\begin{equation}\label{estimator for H}
    \hat H_n(X, Y) = \frac{2}{\hat{\mathcal B}^2_n} \{\hat{\mathcal B}^4_n + (4 - 3\hat{\mathcal B}^4_n)^{1/2} - 2\}^{1/2}.
\end{equation}
Our work builds upon these estimators of the Bhattacharyya affinity coefficient and the Hellinger correlation, and proposes \rev{an} SDR method that offers both theoretical and empirical advantages over existing approaches.

\rev{There are several instances of $f$-divergence (e.g., the total variation distance) that share similar properties as the Hellinger correlation. However, unlike the Hellinger correlation, there is currently a lack of computationally efficient estimators with solid theoretical guarantees for these divergences, which is the main bottleneck for using those in SDR applications. We therefore focus on the Hellinger correlation in this work, while leaving the exploration of other $f$-divergence measures for SDR as an interesting avenue for future research.}


\section{Main Results}
\label{SE : TR}
We now introduce the main results of this work by focusing on the setting where the structural dimension is $d=1$ and both $\bX$ and $Y$ have continuous distributions. In this setting, the multi-index model \eqref{continuous Y SDR} becomes the single-index model
\begin{equation}\label{Model Assumption}
    Y = g(\rev{\eta_0}\trans \bX,\varepsilon),
\end{equation}
and our goal is then to estimate the central subspace spanned by $\rev{\eta_0}$ through the Hellinger correlation. 

\subsection{Method}
For the purpose of identification, we restrict the parameter space to the unit sphere of $\mathbb R^p$, which is denoted as $\mathbb S^{p-1}$. Since $H$ is a monotonically decreasing function of $\mathcal B$, minimizing $\mathcal B$ is equivalent to maximizing $H$.  
Thus, our objective is to find $\rev{\eta_0}$ such that
 \begin{equation}\label{objective function}
     \rev{\eta_0} = \argmax_{\rev{\alpha}  \in \mathbb S^{p-1}} H(\rev{\alpha} \trans \bX, Y) =  \argmin_{\rev{\alpha}  \in \mathbb S^{p-1}} \mathcal B(\rev{\alpha} \trans \bX, Y). 
 \end{equation}
We use the sphere coordinate to represent $\mathbb S^{p-1}$. 
To represent the direction vector in the Euclidean coordinate, we convert it to the $p -1$ radian tuple. 
For $\rev{\alpha}  \in \mathbb R^p$, there exists $\rev{\phi} = (\phi_1, \phi_2, \dotsc, \phi_{p-1})$ where $\phi_1,\dotsc,\phi_{p-2} \in [0, \pi]$ and $\phi_{p-1} \in [0, 2\pi)$ defined as below: 
\begin{align}\label{to sphere coordinate}
\begin{split}
    \phi_1 &= \arctan\Big(\sqrt{\alpha_p^2 + \dotsb + \alpha_2^2} / \alpha_1\Big)\\
    \phi_2 &= \arctan\Big(\sqrt{\alpha_p^2 + \dotsb + \alpha_3^2} / \alpha_2\Big)\\
            &\vdots\\
    \phi_{p-1} &= \arctan(\alpha_p / \alpha_{p-1}).
\end{split}
\end{align}
Given the radian tuple, our optimization process consists of two steps.
First, we use the simulated annealing method \cite{belisle1992convergence} given initial values produced by existing SDR methods: SIR, SAVE, DR, and MAVE.
Second, starting with the results of the first method, we apply the downhill simplex method proposed by \citet{Nelder1965AMinimization}. After optimization, we transform $\rev{\phi}$ and return $(\alpha_1,\ldots,\alpha_p)^\top \in \mathbb{R}^p$ defined as 
\begin{align}\label{from sphere coordinate}
\begin{split}
    \alpha_1 &= \cos(\phi_1)\\
    \alpha_2 &= \sin(\phi_1)\cos(\phi_2)\\
            &\vdots\\
    \alpha_{p-1}  &= \sin(\phi_1)\dotsb\sin(\phi_{p-2})\cos(\phi_{p-1})\\
    \alpha_p &= \sin(\phi_1)\dotsb\sin(\phi_{p-2})\sin(\phi_{p-1}).
\end{split}
\end{align}

The sample-level estimation procedure is based on our estimators for $\mathcal{B}$ and $H$ given in (\ref{estimator for B}) and (\ref{estimator for H}):
\begin{align*}
     \rev{\hat{\eta}_n} = \argmax_{\rev{\alpha}  \in \mathbb S^{p-1}} \hat{H}_n(\rev{\alpha} \trans \bX, Y) =  \argmin_{\rev{\alpha}  \in \mathbb S^{p-1}} \hat{\mathcal B}_n(\rev{\alpha} \trans \bX, Y),
\end{align*}
and our next goal is to investigate theoretical and empirical properties of $\rev{\hat{\eta}_n}$. 

\rev{Before moving on, let us briefly discuss the computational complexity of the proposed procedure. Our method involves computing the Hellinger correlation estimator, which has a complexity of $O(n^2 p)$. The transformation of $\alpha$ into spherical coordinates and back into Euclidean coordinates adds a complexity of $O(p)$ per iteration, maintaining the overall complexity at $O(n^2 p)$ per iteration. The number of iterations for the downhill simplex method to reach a local optimum varies depending on several factors such as initial values and tolerance, making precise complexity analysis challenging. Nevertheless, denoting the number of iterations as $k$, the overall complexity of our method can be concisely written as $O(n^2 p k)$.}

\subsection{Theoretical Results}
In this section, we show the consistency of the sample-level estimation $\rev{\hat{\eta}_n}$. 
First, we show that the population-level estimation (\ref{objective function}) recovers the central space and the solution is unique up to a sign-flip. Specifically, the next theorem shows that we can recover the central subspace by maximizing $H(\rev{\alpha} \trans \bX, Y)$ with respect to $\rev{\alpha}  \in \mathbb S^{p-1}$, i.e., $\rev{\eta_0} = \argmax H(\rev{\alpha} \trans \bX, Y)$ over all $\rev{\alpha}  \in \mathbb S^{p-1}$.
\begin{theorem}\label{argmin is base of central space}
    Let $\bX \in \mathbb R^p$ be a random vector and $Y \in \mathbb R$ be a random variable. 
    Let $\rev{\eta_0}\in \mathbb S^{p-1}$ be the basis of the central subspace $\mathscr S_{Y|\bX}$. 
    Then $\rev{\eta_0} = \argmax H(\rev{\alpha} \trans \bX, Y)$ for all $\rev{\alpha}  \in \mathbb{S}^{p-1}$. Moreover it holds that $H(\rev{\eta_0}\trans \bX, Y) = H(\rev{\alpha} \trans \bX, Y)$ if and only if $\Span(\rev{\eta_0}) = \Span(\rev{\alpha} )$.
\end{theorem}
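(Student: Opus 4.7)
The plan is to reduce maximizing the Hellinger correlation $H$ to analyzing the squared Hellinger distance $\mathcal{H}^2 = 1 - \mathcal{B}$, exploiting that the map $h(\mathcal{B}) = 2\mathcal{B}^{-2}\{\mathcal{B}^4 + (4 - 3\mathcal{B}^4)^{1/2} - 2\}^{1/2}$ is a strictly decreasing bijection on $[0,1]$, and then to deploy the data processing inequality (DPI) for $f$-divergences together with its equality condition. Since $\mathcal{H}^2$ is an $f$-divergence with strictly convex generator $\varphi(t) = (\sqrt{t} - 1)^2$, for any $\alpha \in \mathbb{S}^{p-1}$ the DPI applied to the deterministic map $\bX \mapsto \alpha\trans \bX$ gives $\mathcal{H}^2(\alpha\trans \bX, Y) \le \mathcal{H}^2(\bX, Y)$, where $\mathcal{H}^2(\bX, Y)$ denotes the squared Hellinger divergence between $P_{\bX, Y}$ and $P_\bX \otimes P_Y$. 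On the other hand, by the defining property $Y \indep \bX \mid \eta_0\trans \bX$ of the central subspace, the likelihood ratio $f_{\bX, Y}/(f_\bX f_Y)$ depends on $\bX$ only through $\eta_0\trans \bX$, so $\eta_0\trans \bX$ is a sufficient statistic for the independence-testing problem and the DPI is tight: $\mathcal{H}^2(\eta_0\trans \bX, Y) = \mathcal{H}^2(\bX, Y)$. Chaining these bounds yields $\mathcal{H}^2(\alpha\trans \bX, Y) \le \mathcal{H}^2(\eta_0\trans \bX, Y)$, and hence $H(\alpha\trans \bX, Y) \le H(\eta_0\trans \bX, Y)$, establishing the first claim.

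For the equality characterization, the ``if'' direction is immediate: if $\Span(\alpha) = \Span(\eta_0)$ with $\alpha, \eta_0 \in \mathbb{S}^{p-1}$, then $\alpha = \pm \eta_0$, and the invariance of $H$ under strictly monotone marginal transformations (applied to the sign flip $x \mapsto -x$) gives $H(\alpha\trans \bX, Y) = H(\eta_0\trans \bX, Y)$. For the ``only if'' direction, suppose $H(\alpha\trans \bX, Y) = H(\eta_0\trans \bX, Y)$; then $\mathcal{H}^2(\alpha\trans \bX, Y) = \mathcal{H}^2(\bX, Y)$, and the strict DPI for the strictly convex generator $\varphi(t) = (\sqrt{t} - 1)^2$ forces $\bX \mapsto \alpha\trans \bX$ itself to be a sufficient statistic, i.e., $Y \indep \bX \mid \alpha\trans \bX$. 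Thus $\Span(\alpha)$ is a dimension reduction subspace, and the minimality of the central subspace together with $\dim \Span(\alpha) = \dim \Span(\eta_0) = 1$ forces $\Span(\alpha) = \Span(\eta_0)$.

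The main technical hurdle is the strict equality condition of the DPI. While the plain DPI for $f$-divergences is already recorded in Section~\ref{SE:Back and Moti}, the sharper assertion used above -- that $\mathcal{H}^2(\alpha\trans \bX, Y) = \mathcal{H}^2(\bX, Y)$ implies the Markov condition $Y \indep \bX \mid \alpha\trans \bX$ -- genuinely leans on the strict convexity of $\varphi(t) = (\sqrt{t}-1)^2$ through the equality case of Jensen's inequality applied to the density representation \eqref{f-div by pdf}. This is the step where the proof stops being a matter of unfolding definitions; the remaining ingredients, namely the direct computation $\mathcal{H}^2(\eta_0\trans \bX, Y) = \mathcal{H}^2(\bX, Y)$ from the factorization $f_{\bX, Y} = f_\bX \cdot f_{Y \mid \eta_0\trans \bX}$ and the passage from $Y \indep \bX \mid \alpha\trans \bX$ to $\Span(\alpha) = \Span(\eta_0)$ via minimality, are routine under the one-dimensional central-space assumption of Section~\ref{SE : TR}.
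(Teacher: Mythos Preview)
Your argument is correct and takes a genuinely different route from the paper. For the inequality, the paper applies the data processing inequality directly to the Markov chain $\alpha\trans\bX \to \eta_0\trans\bX \to Y$ (obtained from $Y \indep \bX \mid \eta_0\trans\bX$ by shrinking $\sigma$-algebras), yielding $H(\alpha\trans\bX, Y) \le H(\eta_0\trans\bX, Y)$ without ever introducing the multivariate quantity $\mathcal{H}^2(\bX, Y)$; you instead route through $\mathcal{H}^2(\bX, Y)$ as an intermediary, proving $\mathcal{H}^2(\alpha\trans\bX, Y) \le \mathcal{H}^2(\bX, Y)$ by DPI for the projection and $\mathcal{H}^2(\eta_0\trans\bX, Y) = \mathcal{H}^2(\bX, Y)$ by sufficiency. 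For the equality characterization, the paper argues by contradiction via a direct Jensen computation on the integral representation, after a case split on whether $\alpha\trans\bX$ and $\eta_0\trans\bX$ are monotonically related, and shows that equality would force $\eta_0\trans\bX \indep Y$. Your argument is cleaner: equality in the DPI for the channel $\bX \mapsto \alpha\trans\bX$ with strictly convex $\varphi$ forces the likelihood ratio $f_{\bX,Y}/(f_\bX f_Y)$ to be $\sigma(\alpha\trans\bX, Y)$-measurable, hence $Y \indep \bX \mid \alpha\trans\bX$, and minimality of the central subspace finishes. The paper's route has the advantage of staying entirely within the bivariate framework and not invoking the (standard but unstated in the paper) strict-DPI equality condition; your route is more conceptual, avoids the monotonic-relationship case analysis, and directly exhibits $\Span(\alpha)$ as a dimension-reduction subspace.
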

\begin{proof}
    \citet{Geenens2022TheCorrelation} explained that the Hellinger correlation satisfies the generalized data processing inequality. That is, $H(X, Y)  \le \min\{H(X, Z) , H(Y, Z)\}$ if $X \ \indep \ Y  \ | \ Z$.
    Let $P_{\rev{\alpha}}$ be the orthogonal projection matrix generated by $\rev{\alpha} $. 
    In addition, let $\sigma(A)$ denote the smallest $\sigma$-algebra generated by the random variable $A$.
    From sufficient dimension reduction assumptions, for any $\rev{\alpha}  \in \mathbb S^{p-1}$,
    \begin{equation*}
        \bX \ \indep \ Y \ | \ \rev{\eta} _0\trans \bX \Rightarrow P_{\rev{\alpha} } \bX \ \indep \ Y \ | \ \rev{\eta_0}\trans \bX
    \end{equation*}
    since $\sigma(P_{\rev{\alpha} } \bX) \subseteq \sigma(\bX)$.
    We also have $\sigma(P_{\rev{\alpha} } \bx) = \sigma(\rev{\alpha} \trans \bx)$.
    Thus,
    \begin{equation*}
        \rev{\alpha} \trans \bX \ \indep \ Y \ | \ \rev{\eta_0}\trans \bX.
    \end{equation*}
    By the property of the Hellinger correlation,
    \begin{equation*}
        H(\rev{\alpha} \trans \bX, Y) \le H(\rev{\eta_0}\trans \bX, Y).
    \end{equation*}
    Thus, $H(\rev{\alpha} \trans \bX, Y)$ achieves the maximum when $\rev{\alpha}  = \rev{\eta_0}$.

    Next we prove that $H(\eta_0 \trans \bX, Y) = H(\alpha \trans \bX, Y)$ \emph{if and only if} $\Span(\eta_0) = \Span(\alpha)$. The ``\emph{if}'' direction is trivial because the Hellinger correlation is invariant to monotonic transformations. We thus focus on the ``\emph{only if}'' direction. 
    
Suppose now that $\rev{\alpha_0}  \in \mathbb S^{p-1}$ is another maximizer. 
If there is a monotonic relation between $\rev{\alpha_0} \trans \bX$ and $\rev{\eta_0}\trans \bX$, then $\sigma(\rev{\alpha_0} \trans \bX)=\sigma(\rev{\eta_0}\trans \bX)$ and $\rev{\alpha_0}  \in \mathscr S_{Y|\bX}$. We next assume that there is no monotonic relationship between $\rev{\alpha_0} \trans \bX$ and $\rev{\eta_0}\trans \bX$, and show that this will contradict our condition that $\alpha_0$ is another maximizer of $H(\alpha^\top X, Y)$. Since the Hellinger correlation is a monotone increasing function of the squared Hellinger distance, it suffices to prove that $\mathcal{H}^2(X_1,Y) < \mathcal{H}^2(X_2,Y)$ where $X_1=\rev{\alpha_0} \trans \bX$ and $X_2=\rev{\eta_0}\trans \bX$. Since $X_1$ and $X_2$ do not have a monotonic relationship, the density functions are written as $p(x_1,y)=\int p(x_1|x_2)p(x_2,y)dx_2$ and $p(x_1)=\int p(x_1|x_2)p(x_2) dx_2$. Equipped with this notation, we have
{\allowdisplaybreaks
\begin{align*}
    & \rev{\mathcal H^2}(X_1, Y)\\ 
    &= \iint \rev{\varphi} \left ( \frac{p(x_1,y)}{p(x_1)p(y)} \right ) p(x_1)p(y) dx_1dy\\
    &= \iiint \rev{\varphi} \left ( \frac{\int p(x_1|x_2)p(x_2,y)dx_2}{\int p(x_1|x_2)p(x_2)p(y) dx_2} \right )  \\
    &\quad \quad \quad \times p(x_1|x_2)p(x_2)p(y) dx_1dx_2dy\\
    &\le \iiint \rev{\varphi} \left ( \frac{p(x_2,y)}{p(x_2)p(y)} \right )  p(x_1|x_2)p(x_2)p(y) dx_1dx_2dy,\\
    &=  \iint \rev{\varphi} \left ( \frac{p(x_2,y)}{p(x_2)p(y)} \right ) p(x_2)p(y) dx_2dy\\
    &= \rev{\mathcal H^2}(X_2,Y),
\end{align*}}
where the inequality in the third line comes from Jensen's inequality as used in the proof of Theorem 4 in the appendix of \citet{Kinney2014EquitabilityCoefficient}. 
Notice that  \rev{the the squared Hellinger
distance $\mathcal{H}^2$ uses $\varphi(t) = (t^{1/2}-1)^2$, which is strongly convex.} Thus, the equality holds if and only if $\frac{p(x_2,y)}{p(x_2)p(y)}=1$. In other words, the equality holds if and only if $\rev{\eta_0}\trans \rev{\bX} \indep Y$, which contradicts the assumption that $\rev{\eta_0} \in \mathbb{S}^{p-1}$ is the basis of the central subspace $\mathscr S_{Y|\bX}$. 
Therefore, $\rev{\alpha_0} \trans \rev{\bX}$ and $\rev{\eta_0}\trans \rev{\bX}$ have a monotonic relationship and $\Span(\rev{\eta_0}) = \Span(\rev{\alpha_0} )$.

\end{proof}

\cref{argmin is base of central space} indicates that one can find a basis of the central subspace by optimizing the Hellinger correlation or Bhattacharyya affinity coefficient. 
Since $\rev{\eta_0}$ and $-\rev{\eta_0}$ span the same space, the result of optimization is not unique. 
However, the parameter that one wants to obtain is the central space.
Thus, our goal is identifiable.

The next theorem shows that, for any direction vector, the sample-level Hellinger correlation of our objective function is consistent.
\begin{theorem}\label{convergence in prob}
    Let $\rev{\alpha}  \in \mathbb S^{p-1}$ be an arbitrary vector. Then 
    \begin{align}
        \hat {H}_n(\rev{\alpha} \trans \bX, Y) \overset{P}{\longrightarrow} H(\rev{\alpha} \trans\bX, Y)  
    \end{align}
    where $\overset{P}{\longrightarrow}$ means convergence in probability.
\end{theorem}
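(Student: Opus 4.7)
The plan is to exploit the fact that the map $h:x \mapsto 2x^{-2}\{x^4 + (4-3x^4)^{1/2} - 2\}^{1/2}$ from the Bhattacharyya coefficient to the Hellinger correlation is continuous (in fact a bijection on $[0,1]$, as noted right after Definition \ref{def:hellinger correlation}), and $\hat{H}_n = h(\hat{\mathcal{B}}_n)$ while $H = h(\mathcal{B})$. Hence, by the continuous mapping theorem it suffices to prove
\[
\hat{\mathcal{B}}_n(\alpha^\top X, Y) \;\overset{P}{\longrightarrow}\; \mathcal{B}(\alpha^\top X, Y).
\]
Once this is done, the claimed convergence of $\hat{H}_n$ to $H$ follows immediately. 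This reduction is important because it lets us focus all the probabilistic work on the nearest-neighbor statistic \eqref{estimator for B}, which is linear in the distances, rather than on the (nonlinear) composition $h\circ \hat{\mathcal{B}}_n$.

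To analyze $\hat{\mathcal{B}}_n$, I would introduce an ``oracle'' version built from the true copula transforms. Set $U_i = (F_{\alpha^\top X}(\alpha^\top X_i),\, F_Y(Y_i))$, let $R_i = \min_{j\ne i}\|U_j - U_i\|_2$, and define
\[
\tilde{\mathcal{B}}_n \;=\; \frac{2\sqrt{n-1}}{n}\sum_{i=1}^{n} R_i.
\]
Because the pairs $U_i$ are i.i.d.\ with density equal to the copula density $c_{\alpha^\top X, Y}$ on the unit square, the Leonenko--Pronzato--Savani nearest-neighbor entropy estimator theory (the same result that motivates \eqref{estimator for B} in \citet{Geenens2022TheCorrelation}) yields $\tilde{\mathcal{B}}_n \to \mathcal{B}(\alpha^\top X, Y)$ in probability (indeed almost surely) under mild integrability of $\sqrt{c_{XY}}$, which is automatic since $\mathcal{B}\le 1$. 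This step imports an existing convergence result and is the mechanism that supplies the ``correct'' limit.

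The substantive work is then the triangle-inequality step
\[
|\hat{\mathcal{B}}_n - \mathcal{B}| \;\le\; |\hat{\mathcal{B}}_n - \tilde{\mathcal{B}}_n| + |\tilde{\mathcal{B}}_n - \mathcal{B}|,
\]
showing that replacing the true transforms $U_i$ by the empirical ones $\hat U_i = (\hat F_{\alpha^\top X}(\alpha^\top X_i),\, \hat F_Y(Y_i))$ is asymptotically negligible. Writing
\[
|\hat{\mathcal{B}}_n - \tilde{\mathcal{B}}_n| \;\le\; \frac{2\sqrt{n-1}}{n}\sum_{i=1}^n |\hat R_i - R_i|
\]
and using $\bigl|\hat R_i - R_i\bigr| \le \max_j\|\hat U_j - U_j\|_2 + \max_j\|\hat U_j - U_j\|_2$ (since the nearest-neighbor distance is $1$-Lipschitz with respect to each argument), the problem reduces to controlling $\max_{1\le i\le n}\|\hat U_i - U_i\|_2$. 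By the Glivenko--Cantelli theorem (or, quantitatively, the Dvoretzky--Kiefer--Wolfowitz inequality) applied to the continuous distributions of $\alpha^\top X$ and of $Y$, $\sup_t |\hat F_{\alpha^\top X}(t) - F_{\alpha^\top X}(t)| = O_P(n^{-1/2})$ and similarly for $Y$, so $\max_i\|\hat U_i - U_i\|_2 = O_P(n^{-1/2})$. Multiplying by the $\sqrt{n-1}/n$ normalization gives $|\hat{\mathcal{B}}_n - \tilde{\mathcal{B}}_n| = O_P(n^{-1/2})\cdot \sqrt{n} \cdot$ (a factor that I expect to tame by a more careful argument).

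The main obstacle is precisely this last step: the crude bound above is borderline, because the Lipschitz constant of $R_i$ in $U_i$ is not uniformly bounded when points get close, and multiplying the $O_P(n^{-1/2})$ deviation by $\sqrt{n}$ is tight. The refinement I anticipate needing is either (i) to use the fact that only a small fraction of indices $i$ contribute large $|\hat R_i - R_i|$, exploiting that the ordering of nearest neighbors is preserved under rank transformations for most pairs, or (ii) to bound $\sum_i |\hat R_i - R_i|$ directly via $U$-statistic-type arguments, showing it is $o_P(\sqrt{n})$. Either route yields $\hat{\mathcal{B}}_n - \tilde{\mathcal{B}}_n \to 0$ in probability; combined with consistency of the oracle estimator and the continuous mapping argument, this completes the proof.
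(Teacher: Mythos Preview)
Your approach is exactly the paper's: reduce to $\hat{\mathcal{B}}_n \overset{P}{\to} \mathcal{B}$ via the continuous mapping theorem, introduce the oracle $\tilde{\mathcal{B}}_n$ based on the true copula transforms, invoke \citet{Leonenko2008ADensities} for $\tilde{\mathcal{B}}_n \overset{P}{\to} \mathcal{B}$, and then argue $\hat{\mathcal{B}}_n - \tilde{\mathcal{B}}_n \overset{P}{\to} 0$. Where you differ is only in the level of care on that last perturbation step. The paper is terse there: it simply writes ``Since $\hat U_i$ converges to $U_i$ in probability, $\hat R_i$ also converges to $R_i$. Then $\hat{\mathcal{B}}_n - \tilde{\mathcal{B}}_n \overset{P}{\to} 0$,'' without engaging the $\sqrt{n}$-scaling issue you correctly flag (the crude Lipschitz bound combined with DKW gives only $O_P(1)$, not $o_P(1)$). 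So the difficulty you identify is real, and the paper does not resolve it any more rigorously than your proposal does; a fully rigorous argument would require the kind of refinement you sketch, and the paper is effectively deferring this to the analysis in \citet{Geenens2022TheCorrelation}.
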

\begin{proof}
    Since $H(\rev{\alpha} \trans \bX, Y)$ is a continuous function of $\mathcal B(\rev{\alpha} \trans \bX,Y)$, it suffices to show that $\hat{\mathcal B}_n$ converges to $\mathcal{B}$. 
    Let $\rev{\bU} = (U_{\rev{\alpha} \trans \bX}, U_Y)$ for which it holds that $\mathcal{B} = \mathrm{E}[c^{-1/2}(\rev{\bU})]$. 
Since $\hat U_i$ converges to $U_i$ in probability, $\hat R_i$ also converges to $R_i$. Then
\begin{equation}\label{tilde B}
    \hat{\mathcal B}_n - \tilde{\mathcal B}_n \overset{P}{\longrightarrow}  0,
\end{equation}
where $\tilde{\mathcal B}_n =\frac{2\sqrt{n - 1}}{n} \sumi R_i $.

\citet{Leonenko2008ADensities} provides an estimator for $\mathrm{E}[f(\bX)^q]$ where $f$ is the density function of $\bX$. 
Our estimator $\hat{\mathcal B}_n$ corresponds to the case where $q = -1/2$. Theorem 3.2 of \citet{Leonenko2008ADensities} shows that $\tilde{\mathcal B}_n$ converges to ${\mathcal B}$ in probability. Thus,
\begin{equation*}
    \hat{\mathcal B}_n - \mathcal B = (\hat{\mathcal B}_n - \tilde{\mathcal B}_n) + (\tilde{\mathcal B}_n - \mathcal B) \overset{P}{\longrightarrow} 0. \tag*{\qedhere}
\end{equation*}
\end{proof}

With the two above theorems, we can show that the sample-level estimator $\rev{\hat{\eta}_n}$ converges to the true SDR direction in probability.

\begin{theorem}\label{thm: consistency}
    Let $\rev{\hat{\eta}_n} = \argmax\{\hat H_n(\rev{\alpha} \trans X, Y) \,|\, \rev{\alpha}  \in \mathbb{S}^{p-1}\}$ and $\rev{\eta_0} = \mathbb{S}^{p-1}$ be the basis for the central subspace $\mathscr S_{Y|\bX}$. Then $z\rev{\hat{\eta}_n} \overset{P}{\longrightarrow}\rev{\eta_0}$ where $|z| = 1$.
\end{theorem}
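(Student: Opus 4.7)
The plan is to treat \cref{thm: consistency} as a standard $M$-estimator consistency problem on the compact parameter space $\mathbb{S}^{p-1}$, combining \cref{argmin is base of central space} (identification) with a strengthened, uniform version of \cref{convergence in prob}. The first step is to observe that identification in \cref{argmin is base of central space} is only up to sign, so the natural target set is $\{+\eta_0,-\eta_0\}$. Since $\alpha \mapsto H(\alpha\trans\bX,Y)$ is continuous on $\mathbb{S}^{p-1}$ (the joint law of $(\alpha\trans\bX,Y)$ depends continuously on $\alpha$, and both $\mathcal B$ and the map $h$ in \eqref{HCor definition} are continuous), compactness of $\mathbb{S}^{p-1}\setminus V_\varepsilon$ together with the uniqueness of the maximizers yields a well-separation statement: $\sup_{\alpha\notin V_\varepsilon} H(\alpha\trans\bX,Y) < H(\eta_0\trans\bX,Y)$ for every open neighborhood $V_\varepsilon$ of $\{\pm\eta_0\}$.

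The second step is to promote the pointwise convergence in \cref{convergence in prob} to the uniform statement $\sup_{\alpha \in \mathbb{S}^{p-1}} \bigl|\hat H_n(\alpha\trans\bX,Y) - H(\alpha\trans\bX,Y)\bigr| \overset{P}{\longrightarrow} 0$. Because $h$ is uniformly continuous on $[0,1]$, this reduces to uniform convergence of $\hat{\mathcal B}_n(\alpha\trans\bX,Y)$ to $\mathcal B(\alpha\trans\bX,Y)$. I would split this exactly as in the proof of \cref{convergence in prob}, into the ``oracle'' term $\tilde{\mathcal B}_n$ built from the true ranks $U_i$ and the plug-in correction $\hat{\mathcal B}_n-\tilde{\mathcal B}_n$; then cover $\mathbb{S}^{p-1}$ by finitely many $\varepsilon$-balls, apply the pointwise convergence (from \cref{convergence in prob} and Leonenko et al.) at the centers via a union bound, and bound the oscillation of each term within a ball using the Lipschitz dependence of both the rank vectors $(U_{\alpha\trans X_i})_i$ (and their plug-in versions $\hat U_{\alpha\trans X_i}$) and the nearest-neighbor distances $R_i(\alpha)$, $\hat R_i(\alpha)$ on $\alpha$, up to a vanishing contribution from observations near ties of the projected data.

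With uniform convergence and well-separation in hand, a standard argmax consistency argument yields $d(\hat\eta_n,\{\pm\eta_0\}) \overset{P}{\longrightarrow} 0$: any sequence $\alpha_n$ outside every fixed neighborhood $V_\varepsilon$ would eventually have $\hat H_n(\alpha_n\trans\bX,Y) < \hat H_n(\eta_0\trans\bX,Y)$, contradicting the near-maximization property of $\hat\eta_n$. Because $\eta_0 \neq -\eta_0$, for $n$ large enough with high probability exactly one of the two antipodes is strictly closer to $\hat\eta_n$, and defining $z\in\{-1,+1\}$ as the sign achieving $\min_{z'}\|z'\hat\eta_n-\eta_0\|$ then gives $z\hat\eta_n \overset{P}{\longrightarrow} \eta_0$, which is the stated conclusion.

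The hard part will be the uniform control in step two. The estimator $\hat{\mathcal B}_n$ is a nearest-neighbor functional of plug-in ranks rather than a plain empirical mean, so classical Glivenko--Cantelli or VC-entropy arguments do not apply off the shelf. The required stochastic equicontinuity of $\alpha \mapsto \hat R_i(\alpha)$ and $\alpha \mapsto \hat U_{\alpha\trans X_i}$ is morally true---the nearest-neighbor distance is $1$-Lipschitz in its inputs, and the rank map is piecewise constant with jumps only on a measure-zero set where two projected observations coincide---but converting this intuition into a rigorous covering or bracketing argument, without disturbing the delicate $n$-dependence inherited from the Leonenko--Pronzato--Savani estimator, is where the main technical effort lies.
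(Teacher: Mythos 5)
Your proposal is correct in outline and reaches the same conclusion as the paper, but it packages the argument differently and, in doing so, surfaces a step that the paper leaves implicit. The paper argues by contradiction: it extracts a convergent subsequence $\hat\eta_{m(n)}\to\eta_*$ from the compact sphere, uses the defining inequality $\hat H_{m(n)}(\hat\eta_{m(n)}\trans \bX,Y)\ge \hat H_{m(n)}(\eta_0\trans \bX,Y)$, ``takes limits on both sides,'' and invokes the uniqueness part of \cref{argmin is base of central space}. You instead run the standard $M$-estimation recipe: well-separation of the maximizer set $\{\pm\eta_0\}$ plus uniform convergence of $\hat H_n$ over $\mathbb{S}^{p-1}$. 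These are two dressings of the same idea, and crucially both hinge on the same fact: that $\hat H_{n}(\alpha_n\trans\bX,Y)\to H(\alpha_\infty\trans\bX,Y)$ when the evaluation point $\alpha_n$ moves with $n$. \cref{convergence in prob} only gives pointwise convergence at a \emph{fixed} $\alpha$, so the paper's ``take a limit on both sides'' step is not justified by the stated lemmas --- it silently assumes exactly the uniform convergence (or stochastic equicontinuity of $\alpha\mapsto\hat{\mathcal B}_n(\alpha\trans\bX,Y)$) that you isolate as the hard part. Your covering-plus-oscillation sketch for the nearest-neighbor rank functional is the right shape for closing this, but you do not carry it out, so your proof has the same open step as the paper's; the difference is that you name it, whereas the paper elides it. Two smaller points: your continuity-plus-compactness derivation of well-separation needs (at least upper semi-)continuity of $\alpha\mapsto H(\alpha\trans\bX,Y)$, which the paper never establishes either but which its subsequence argument also tacitly requires; and your handling of the sign ambiguity via $z=\argmin_{z'}\|z'\hat\eta_n-\eta_0\|$ matches the paper's $|z|=1$ device.
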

\begin{proof}
    Suppose that $\rev{\hat{\eta}_n}$ is not a consistent estimator of $\rev{\eta_0}$. 
    Since $\mathbb{S}^{p -1}$ is a compact set, $\{\rev{\hat{\eta}_n}\}$ has a subsequence $\{\rev{\hat{\eta}_{m(n)}}\}$ that converges to $\rev{\eta_*}$ where $\Span(\rev{\eta_*}) \ne \Span(\rev{\eta_0})$. 
    Then $\hat H_{m(n)}(\rev{\hat{\eta}_{m(n)}}\trans \bX, Y) \ge \hat H_{m(n)}(\rev{\eta_0}\trans\bX, Y)$. 
    If we take a limit on both sides, we obtain 
    \begin{equation*}
        H(\rev{\eta_*}\trans X, Y) \ge H(\rev{\eta_0}\trans X, Y).
    \end{equation*}
    By \cref{argmin is base of central space}, there is a contradiction since $\rev{\eta_0} = \argmax H(\rev{\alpha} \trans X, Y)$. Thus, $\rev{\hat{\eta}_n}$ is a consistent estimator of $\rev{\eta_0}$.
\end{proof}
Like the population-level approach, maximizing the estimate of the Hellinger correlation may give two different results, $\rev{\hat{\eta}_n}$ or $-\rev{\hat{\eta}_n}$.
The role of $z$ is to equalize the direction.
If we focus on the projection matrix, we can check that two direction vectors align with the same subspace.

We emphasize that our results are derived without the assumption $\rev{\eta_0}^\top \bX  \indep   \rev{\eta_1}^\top \bX$ where $\rev{\eta_0}^\top\rev{\eta_1} = 0$, which is required for other SDR methods based on dependence measures. This assumption may hold asymptotically for distributions satisfying certain moment conditions. However, in the case of long-tailed distributions, such as the Cauchy distribution, this assumption may not be valid. For further details, see \citet{Diaconis1984AsymptoticsPursuit}.

\section{Numerical Experiments}
\label{SE : NR}

\begin{table*}[ht]
\begin{center}
\caption{Model I: Mean and standard deviation (with parentheses) of $\Delta(\mathcal S_{True}, \mathcal S_{Estimated})$ over 100 samples of size $n$ when predictors are normal.}
\label{Table : B, normal}
\begin{tabular}{c|cccccccc} 
 &  SIR   &  SIR-HC  &  SAVE   &  SAVE-HC  &  DR   &  DR-HC  &  MAVE   &  MAVE-HC\\ 
\hline 
\multirow{2}{*}{$n = 100$ }  &  0.9546  &  0.037  &  0.5154  &  0.0355  &  0.2995  &  0.0371  &  0.0654  &  0.0379\\ 
 &  \small{(0.0801)}  &  \small{(0.0211)}  &  \small{(0.1712)}  &  \small{(0.0194)}  &  \small{(0.0744)}  &  \small{(0.0202)}  &  \small{(0.0223)}  &  \small{(0.0203)}\\ 
\multirow{2}{*}{$n = 200$ }  &  0.8868  &  0.0261  &  0.2977  &  0.0242  &  0.1961  &  0.0232  &  0.0353  &  0.0257\\ 
 &  \small{(0.1552)}  &  \small{(0.014)}  &  \small{(0.098)}  &  \small{(0.0136)}  &  \small{(0.0534)}  &  \small{(0.0126)}  &  \small{(0.0087)}  &  \small{(0.0109)}\\ 
\multirow{2}{*}{$n = 400$ }  &  0.8793  &  0.0183  &  0.1938  &  0.0177  &  0.1262  &  0.0183  &  0.0205  &  0.0195\\ 
 &  \small{(0.1603)}  &  \small{(0.0082)}  &  \small{(0.0485)}  &  \small{(0.0096)}  &  \small{(0.0278)}  &  \small{(0.009)}  &  \small{(0.0053)}  &  \small{(0.0095)}\\ 
\end{tabular}
\end{center}
\end{table*}

\begin{table*}[ht]
\begin{center}
\caption{Model II: Mean and standard deviation (with parentheses) of $\Delta(\mathcal S_{True}, \mathcal S_{Estimated})$ over 100 samples of size $n$ when predictors are normal.}
\label{Table : C, normal}
\begin{tabular}{c|cccccccc}
 &  SIR   &  SIR-HC  &  SAVE   &  SAVE-HC  &  DR   &  DR-HC  &  MAVE   &  MAVE-HC\\ 
\hline 
\multirow{2}{*}{$n = 100$ }  &  0.1191  &  0.0737  &  0.9937  &  0.1203  &  0.1659  &  0.0732  &  0.0699  &  0.0602\\ 
 &  \small{(0.0365)}  &  \small{(0.0406)}  &  \small{(0.0078)}  &  \small{(0.0836)}  &  \small{(0.0475)}  &  \small{(0.0452)}  &  \small{(0.0209)}  &  \small{(0.0251)}\\ 
\multirow{2}{*}{$n = 200$ }  &  0.0747  &  0.0356  &  0.7806  &  0.0654  &  0.1007  &  0.0328  &  0.0405  &  0.0314\\ 
 &  \small{(0.0203)}  &  \small{(0.0182)}  &  \small{(0.2891)}  &  \small{(0.0576)}  &  \small{(0.0323)}  &  \small{(0.0144)}  &  \small{(0.0104)}  &  \small{(0.0119)}\\ 
\multirow{2}{*}{$n = 400$ }  &  0.0535  &  0.0209  &  0.0691  &  0.023  &  0.0662  &  0.0208  &  0.0261  &  0.0179\\ 
 &  \small{(0.0152)}  &  \small{(0.0092)}  &  \small{(0.0231)}  &  \small{(0.0132)}  &  \small{(0.0161)}  &  \small{(0.0093)}  &  \small{(0.0067)}  &  \small{(0.0071)}\\ 
\end{tabular}
\end{center}
\end{table*}

\begin{table*}[t!]
\begin{center}
\caption{Model III: Mean and standard deviation (with parentheses) of $\Delta(\mathcal S_{True}, \mathcal S_{Estimated})$ over 100 samples of size $n$ when predictors are normal.} \label{Table : D, normal}
\begin{tabular}{c|cccccccc}
 &  SIR   &  SIR-HC  &  SAVE   &  SAVE-HC  &  DR   &  DR-HC  &  MAVE   &  MAVE-HC\\ 
\hline 
\multirow{2}{*}{$n = 100$ }  &  0.2873  &  0.0186  &  0.9663  &  0.0183  &  0.4165  &  0.0172  &  0.0455  &  0.018\\ 
 &  \small{(0.1141)}  &  \small{(0.0102)}  &  \small{(0.06)}  &  \small{(0.0094)}  &  \small{(0.1897)}  &  \small{(0.0089)}  &  \small{(0.0149)}  &  \small{(0.0095)}\\ 
\multirow{2}{*}{$n = 200$ }  &  0.2011  &  0.0135  &  0.9131  &  0.0135  &  0.2606  &  0.0127  &  0.0218  &  0.0144\\ 
 &  \small{(0.0583)}  &  \small{(0.0069)}  &  \small{(0.1254)}  &  \small{(0.0072)}  &  \small{(0.0919)}  &  \small{(0.007)}  &  \small{(0.0063)}  &  \small{(0.0073)}\\ 
\multirow{2}{*}{$n = 400$ }  &  0.1433  &  0.0092  &  0.4004  &  0.0096  &  0.1761  &  0.0085  &  0.012  &  0.0094\\ 
 &  \small{(0.0372)}  &  \small{(0.0053)}  &  \small{(0.2249)}  &  \small{(0.0047)}  &  \small{(0.0513)}  &  \small{(0.0042)}  &  \small{(0.0026)}  &  \small{(0.0046)}\\ 
\end{tabular}
\end{center}
\end{table*}

\begin{figure*}[ht]
    \centering
    \includegraphics[width = 17cm, height = 7cm]{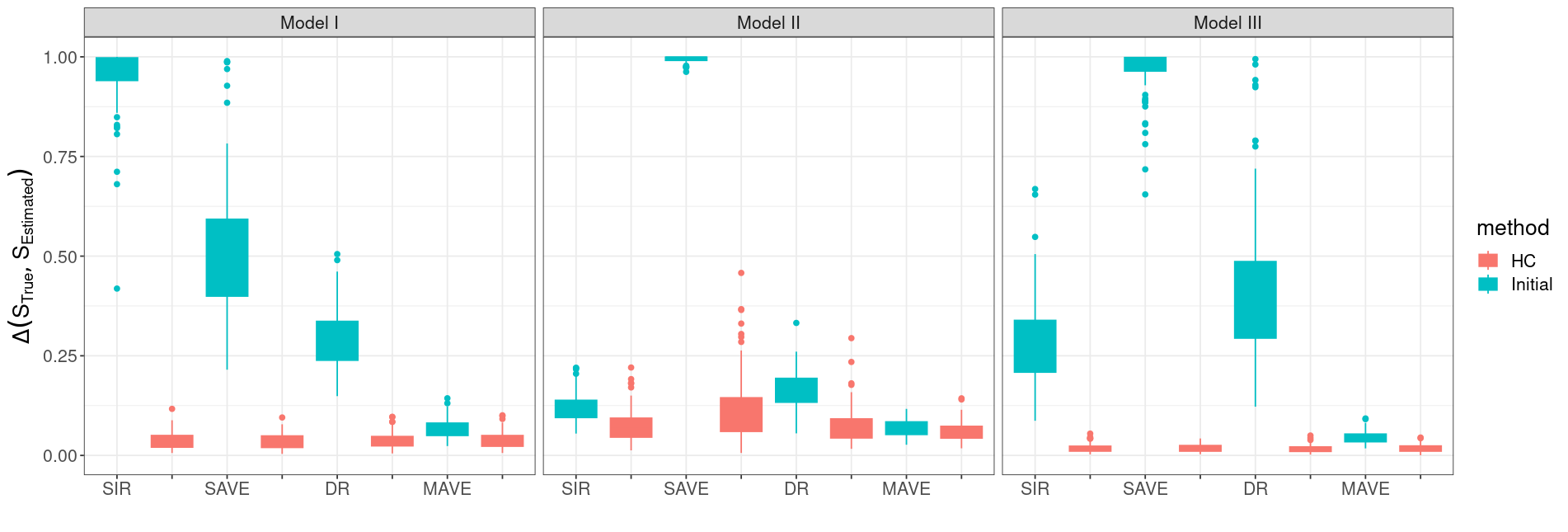}
    \vspace{-0.5cm}
    \caption{Boxplots of $\Delta(\mathcal S_{True}, \mathcal S_{Estimated})$ over 100 samples of size $n=100$ with normal predictors. We compare the performance between SDR Method (aqua blue) and our proposed SDR Method-HC (light coral). As shown, our proposed method consistently outperforms the corresponding SDR methods.}
    \label{figure : Boxplot}
\end{figure*}

To evaluate the accuracy of our proposed method, we conducted simulation experiments under various scenarios:
\begin{align*}
\text{Model I}&:  Y = (\rev{\eta}^\top \bX)^2 + \varepsilon.\\
\text{Model II}&: Y = \exp(\rev{\eta}^\top \bX) + \varepsilon.\\
\text{Model III}&: Y = 5\sin(\rev{\eta}^\top \bX) + \varepsilon,
\end{align*}
where $\varepsilon\sim N(\mu=0,\sigma=0.2)$ and we set $\rev{\eta}$ as
\begin{align*}
\text{Model I}&:  \rev{\eta} = (1,-1,0,0,0,0,0,0,0,0)\trans.\\
\text{Model II}&: \rev{\eta} = (1,1,1,1,1,0,0,0,0,0)\trans.\\
\text{Model III}&: \rev{\eta} = (1,1,0,0,0,0,0,0,0,0)\trans.    
\end{align*}
We generate $\bX$ from two different distributions described as
\begin{align*}
    \text{Normal: }& (X_1,\ldots, X_{10})\trans \sim N_{10}(0, I_{10})\\
    \text{Non-normal: }& X_1\sim \mathrm{Exp}(2), X_2 \sim \mathrm{Exp}(4), X_3\sim\chi^2(5), \\
    &X_4 \sim t(15), X_5 \sim t(3),\\
    &(X_6,\ldots,X_{10})\trans \sim N_5 (0,I_5).
\end{align*}

To assess the performance of our method, we employ the following metric to quantify the difference between two subspaces:
\begin{equation}\label{eq : difference measure}
    \Delta(\mathcal S_{True}, \mathcal S_{Estimated}) = \|P_{\mathcal S_{True}} - P_{\mathcal S_{Estimated}}\|,
\end{equation}
where $\|\cdot\|$ is the maximum eigenvalue of a matrix and $P_{\mathcal S_{True}}$ and $P_{\mathcal S_{Estimated}}$ are the orthogonal projection matrices of the subspace $\mathcal S_{True}=\Span(\rev{\eta_*})$ and $\mathcal S_{Estimated}=\Span(\rev{\hat{\eta}})$.
A smaller value of $\Delta$ indicates a more accurate estimation.

In addition, to provide a robust comparison of the methods, we generate 100 samples of each case with different sample sizes $n=100,200,400$. Then we compute the mean and standard deviation of $\Delta(\mathcal S_{True}, \mathcal S_{Estimated}) $ over 100 samples and summarize them in Table \ref{Table : B, normal}--Table \ref{Table : D, normal} and also in the appendix.

Table \ref{Table : B, normal} shows the results of the experiment under Model I with the normal predictors. It presents $\Delta(\mathcal S_{True}, \mathcal S_{Estimated})$ when we use SIR \citep{Li1991SlicedReduction}, SAVE \citep{cookSlicedInverseRegression1991}, DR \citep{Li2007OnReduction}, and MAVE \citep{Xia2002AnSpace}. Then SIR-HC, SAVE-HC, DR-HC, MAVE-HC are our methods with their initial values in the iteration as SIR, SAVE, DR, and MAVE, respectively. One can check that SIR fails to recover the central subspace because of the U-shape structure in the model.
However, our proposed method based on SIR successfully discloses the central subspace.
This result shows that our proposed method can overcome the weakness of initial methods. Furthermore, our method enhances the SDR performance significantly even with the MAVE, which is known to be a gold standard. More importantly, the accuracy increases as $n$ increases. It shows an experimental justification of the consistency of our method. 

Table \ref{Table : C, normal} shows the results of the experiment under Model II with the normal predictors. Model II has a strong monotonic relation in which SAVE does not perform well. Similar to Model I, our method can capture central space effectively even with the worst case, and improves its accuracy significantly. 

Table \ref{Table : D, normal} presents the summary of results for Model III with the normal predictors. We can see that the inverse-regression methods such as SIR, SAVE, and DR require a larger sample size to capture the direction correctly. However, with our proposed method, all the estimators become closer to the true direction with high accuracy even with the small sample size. 

Figure \ref{figure : Boxplot} provides the boxplots of  $\Delta(\mathcal S_{True}, \mathcal S_{Estimated})$ for Model I, II, and II with $n=100$ and normal predictors. Overall, our method improves existing SDR methods effectively in various scenarios. 

\rev{Additional comparisons with contemporary SDR methods using distance covariance, HSIC, and ball covariance are provided in the appendix, along with simulation results with non-sparse $\eta$ and non-normal predictors.}

\section{Real Data Analysis}\label{sec: real data}
We apply our methods to the real estate valuation dataset in the UCI Machine Learning Repository \citep{misc_real_estate_valuation_477}. There are 414 observations, and the features in the dataset are
\begin{itemize}
    \item Transaction date,
    \item house age,
    \item distance to the nearest MRT station,
    \item number of convenience stores,
    \item latitude,
    \item longitude, and
    \item (Target) house price of unit area.
\end{itemize}
We remove the transaction date variable and standardize the predictors before applying SDR methods. Subsequently, we randomly divide the dataset into a training sample of size 300 and use the remaining objects as the test sample. SDR methods are then applied to the training set to extract the SDR direction, followed by fitting a local polynomial regression using the remaining variables to predict the house price. \rev{The weights are given equally for each observation and quadratic polynomial was used to fit the model.} Finally, we predict the house price in the test set and compute the test MSE to evaluate the SDR performance. It is important to note that in real data applications, the true central space is unknown, which is why we apply local polynomial regression between the target and the reduced predictor $\rev{\hat{\eta}} \trans X$ to measure the performance. The results are summarized in Table \ref{tab:real data}.

\begin{table}[h]
    \centering
    \caption{Real data analysis: Test MSE of house price with SDR methods. \rev{The last row is the result from the generalized additive models (GAM) without dimension reduction.}}
    \label{tab:real data}
    \begin{tabular}{ccc}
         \hline
        SDR Methods & MSE: SDR & MSE: SDR-HC\\
         \hline
            SIR	 & 0.313 &	\textbf{0.243}\\
            SAVE & 0.496 &	0.248\\
            DR	 & 0.314 &	0.264\\
            MAVE & 0.485 &	\textbf{0.243}\\
            DCOV & 0.301 &	0.263\\
            HSIC & 0.257 &	0.252\\
            BCOV & 0.279 &	0.260\\ \midrule
            GAM(without SDR)  & 0.245 & -\\
         \hline
    \end{tabular}
\end{table}

Table \ref{tab:real data} shows that all the SDR methods have been improved with our method. The estimated direction with MAVE-HC is 
$$
\rev{\hat{\eta}} =( -0.089 ,-0.987,  0.019,  0.067,  0.117)\trans,
$$
which tells that ``house age'' is a dominating factor in the single-index nonparametric regression model estimation, $\hat Y = \hat f ( \rev{\hat{\eta}} \trans X)$. 

Table \ref{tab:real data} shows that our proposed approach can succinctly capture the essential characteristics of predictors in regression using a single-index model, while maintaining regression performance expressed by MSE.

\section{Discussion}
\label{SE : Discuss}
In this work, we introduce a novel approach to recovering the central space by leveraging the Hellinger correlation, specifically designed for scenarios with a structural dimension of one. Our method sets itself apart from existing approaches, such as those dependent on distance covariance and HSIC, by relaxing the stringent requirements of independence assumptions, which frequently present challenges in practical applications. Significantly, our method excels at deriving theoretical results without imposing such technical constraints. Moreover, numerical experiments demonstrate its capability to enhance current existing sufficient dimension reduction methods. Furthermore, the single-index model SDR provides an interpretation of the intrinsic structure of the nonparametric regression model, as demonstrated in the real-world data analysis.

\rev{Although the current approach has proven effective, it opens up several important avenues for future work. One promising direction is to extend the application of the proposed method to classification problems. This extension would involve modifying the Hellinger correlation to accommodate categorical variables, such as by leveraging the discrete f-divergence \citep{geenens2020copula}. The question of interest is then to see whether our method can help improve classification accuracy while maintaining theoretical validity under weak assumptions. Another direction for future work is to extend our framework to multi-index models by using a sequential generation of SDR directions as done in \citet{christou2020central} or generalizing the Hellinger correlation to multivariate cases. Additionally, it would be valuable to study the convergence rate of the proposed method, which would provide a deeper understanding of its performance across various settings. We believe that exploring these topics will expand the applicability of our method and make valuable contributions to the field.}

\section*{Impact Statement}
This paper presents work whose goal is to advance the field of Machine Learning.  There are many potential societal consequences of our work, none which we feel must be specifically highlighted here.

\clearpage

\section*{Acknowledgements}
We would like to thank four anonymous reviewers and the program chair for the valuable comments and suggestions, which have significantly improved the quality of the paper. For Jun Song, this work is supported by the National Research Foundation of Korea (NRF) grants funded by the Korea government (MSIT) (No. 2022R1C1C1003647, 2022M3J6A1063595, and RS-2023-00219212) and a Korea University Grant (K2402531). Ilmun Kim acknowledges support 
from the National Research Foundation of Korea (NRF) funded by the Ministry of Education (2022R1A4A1033384), the Korean government (MSIT) (RS-2023-00211073), and support from the Yonsei University Research Fund (2022-22-0289).

\bibliography{EnhancingSDR_HC}
\bibliographystyle{icml2024}

\newpage
\appendix
\onecolumn
\section{Additional Simulation Results}
The following tables present simulation results when the predictors are non-normal. In most cases, our method significantly enhances existing SDR methods. \rev{The specific simulation settings are provided in Section \ref{SE : NR}.}

\begin{table}[h]
\begin{center}
\caption{Model I: Mean and standard deviation (with parentheses) of $\Delta(\mathcal S_{True}, \mathcal S_{Estimated})$ over 100 samples of size $n$ when predictors are non-normal.}
\label{Table : B, non-normal}
\begin{tabular}{c|cccccccc}
 &  SIR   &  SIR-HC  &  SAVE   &  SAVE-HC  &  DR   &  DR-HC  &  MAVE   &  MAVE-HC\\ 
\hline 
\multirow{2}{*}{$n = 100$ }  &  0.6971  &  0.2674  &  0.6735  &  0.2391  &  0.3958  &  0.2381  &  0.3233  &  0.2609\\ 
 &  \small{(0.2136)}  &  \small{(0.2471)}  &  \small{(0.2009)}  &  \small{(0.2264)}  &  \small{(0.1993)}  &  \small{(0.2279)}  &  \small{(0.3243)}  &  \small{(0.2457)}\\ 
\multirow{2}{*}{$n = 200$ }  &  0.7144  &  0.1882  &  0.6396  &  0.1636  &  0.4058  &  0.1804  &  0.1217  &  0.1915\\ 
 &  \small{(0.1396)}  &  \small{(0.1613)}  &  \small{(0.2075)}  &  \small{(0.1299)}  &  \small{(0.1405)}  &  \small{(0.1521)}  &  \small{(0.1517)}  &  \small{(0.1565)}\\ 
\multirow{2}{*}{$n = 400$ }  &  0.7284  &  0.1095  &  0.472  &  0.1171  &  0.3929  &  0.1108  &  0.0533  &  0.1193\\ 
 &  \small{(0.1064)}  &  \small{(0.079)}  &  \small{(0.2394)}  &  \small{(0.0925)}  &  \small{(0.1284)}  &  \small{(0.0851)}  &  \small{(0.0264)}  &  \small{(0.0977)}\\ 
\end{tabular}
\end{center}
\end{table}

\rev{Table \ref{Table : B, non-normal} shows the experimental results under Model I with the non-normal predictors. It presents $\Delta(\mathcal S_{True}, \mathcal S_{Estimated})$ when we use SIR \citep{Li1991SlicedReduction}, SAVE \citep{cookSlicedInverseRegression1991}, DR \citep{Li2007OnReduction}, and MAVE \citep{Xia2002AnSpace}. As in the main text, the SIR-HC, SAVE-HC, DR-HC, MAVE-HC refer to our approaches, using SIR, SAVE, DR, and MAVE, respectively, as their initial values in the iterations. One can observe that SIR fails to recover the central subspace as previously mentioned for cases with normal predictors. In contrast, our proposed method based on SIR successfully discloses the central subspace.}

\begin{table}[h]
\begin{center}
\caption{Model II: Mean and standard deviation (with parentheses) of $\Delta(\mathcal S_{True}, \mathcal S_{Estimated})$ over 100 samples of size $n$ when predictors are non-normal.} \label{Table : C, non-normal}
\begin{tabular}{c|cccccccc}
 &  SIR   &  SIR-HC  &  SAVE   &  SAVE-HC  &  DR   &  DR-HC  &  MAVE   &  MAVE-HC\\ 
\hline 
\multirow{2}{*}{$n = 100$ }  &  0.2287  &  0.0666  &  0.9200  &  0.1190  &  0.4555  &  0.1049  &  0.0686  &  0.0377\\ 
 &  \small{(0.1429)}  &  \small{(0.0679)}  &  \small{(0.0583)}  &  \small{(0.144)}  &  \small{(0.2447)}  &  \small{(0.1307)}  &  \small{(0.0327)}  &  \small{(0.021)}\\ 
\multirow{2}{*}{$n = 200$ }  &  0.1053  &  0.0193  &  0.9067  &  0.0557  &  0.3717  &  0.0425  &  0.0296  &  0.0121\\ 
 &  \small{(0.0476)}  &  \small{(0.0214)}  &  \small{(0.1324)}  &  \small{(0.0965)}  &  \small{(0.2233)}  &  \small{(0.0712)}  &  \small{(0.0125)}  &  \small{(0.0091)}\\ 
\multirow{2}{*}{$n = 400$ }  &  0.0881  &  0.0069  &  0.5365  &  0.0162  &  0.359  &  0.0126  &  0.0148  &  0.004\\ 
 &  \small{(0.0524)}  &  \small{(0.0074)}  &  \small{(0.3664)}  &  \small{(0.0273)}  &  \small{(0.2045)}  &  \small{(0.0243)}  &  \small{(0.0063)}  &  \small{(0.0027)}\\ 
\end{tabular}
\end{center}
\end{table}

\begin{table}[h]
\begin{center}
\caption{Model III: Mean and standard deviation (with parentheses) of $\Delta(\mathcal S_{True}, \mathcal S_{Estimated})$ over 100 samples of size $n$ when predictors are non-normal.} \label{Table : D, non-normal}
\begin{tabular}{c|cccccccc}
&  SIR   &  SIR-HC  &  SAVE   &  SAVE-HC  &  DR   &  DR-HC  &  MAVE   &  MAVE-HC\\ 
\hline 
\multirow{2}{*}{$n = 100$ }  &  0.2457  &  0.0421  &  0.7842  &  0.0393  &  0.4237  &  0.0379  &  0.0286  &  0.0363\\ 
 &  \small{(0.1431)}  &  \small{(0.0286)}  &  \small{(0.2328)}  &  \small{(0.0236)}  &  \small{(0.2585)}  &  \small{(0.0272)}  &  \small{(0.0178)}  &  \small{(0.0251)}\\ 
\multirow{2}{*}{$n = 200$ }  &  0.2044  &  0.0298  &  0.8084  &  0.0292  &  0.3501  &  0.0317  &  0.0178  &  0.0288\\ 
 &  \small{(0.0997)}  &  \small{(0.0235)}  &  \small{(0.1985)}  &  \small{(0.0226)}  &  \small{(0.2204)}  &  \small{(0.0255)}  &  \small{(0.0097)}  &  \small{(0.0225)}\\ 
\multirow{2}{*}{$n = 400$ }  &  0.1889  &  0.0229  &  0.7952  &  0.0237  &  0.3232  &  0.0246  &  0.0117  &  0.0226\\ 
 &  \small{(0.0824)}  &  \small{(0.0186)}  &  \small{(0.1989)}  &  \small{(0.0176)}  &  \small{(0.1856)}  &  \small{(0.0175)}  &  \small{(0.0072)}  &  \small{(0.0175)}\\ 
\end{tabular}
\end{center}
\end{table}
\rev{Table \ref{Table : C, non-normal} and \ref{Table : D, non-normal} show the experimental results under Model II and Model III with the non-normal predictors. Similar to normal cases, our method can effectively capture the central space and significantly improves its accuracy. 
}

\rev{Tables \ref{tab:res_with_dcov_bcov_and_HSIC_via_hcor_Model_I}--\ref{tab:res_with_dcov_bcov_and_HSIC_via_hcor_Model_III} show the experimental results using modern SDR methods as the initial values. We compared three existing methods (via distance covariance \cite{Sheng2013DirectionCovariance}, via HSIC \cite{Zhang2015DirectionCriterion} and via Ball covariance \cite{Zhang2019RobustCovariance}) that capture the central subspace by maximizing dependency measures.
The simulation settings are same as those in Tables 1--3. The predictors follow normal distributions. 
The true $\eta$s and models are detailed in Section \ref{SE : NR}. Our proposed method enhances outcomes by using initial values provided by these three methods. In most cases, the standard deviation has also decreased.}\rev{
One can verify that our method still improves the results, even though the results of the existing method were already promising.
}

\begin{table}[h!]
    \begin{center}
     \caption{Model I: Mean and standard deviation (with parentheses) of $\Delta(\mathcal S_{True}, \mathcal S_{Estimated})$ over 100 samples of size $n$ where modern methods are initial methods.}
     \label{tab:res_with_dcov_bcov_and_HSIC_via_hcor_Model_I}
    \begin{tabular}{c|cccccc}
      & DCOV & DCOV-HC & HSIC & HSIC-HC & BCOV & BCOV-HC\\ \hline
       \multirow{2}{*}{n = 100} &  0.1521  &  0.0373  &  0.1578  &  0.0379  &  0.1425  &  0.0339\\ 
                                 &  \small{(0.1744)}  &  \small{(0.0224)}  &  \small{(0.1694)}  &  \small{(0.0215)}  &  \small{(0.1943)}  &  \small{(0.0191)}\\ 
       \multirow{2}{*}{n = 200} &  0.0966  &  0.0232  &  0.0948  &  0.0224  &  0.0747  &  0.0227\\ 
                                 &  \small{(0.159)}  &  \small{(0.0131)}  &  \small{(0.1293)}  &  \small{(0.0141)}  &  \small{(0.1628)}  &  \small{(0.0121)}\\ 
       \multirow{2}{*}{n = 400} &  0.0732  &  0.0177  &  0.0804  &  0.0182  &  0.0556  &  0.0184\\ 
                                 &  \small{(0.1617)}  &  \small{(0.0086)}  &  \small{(0.1608)}  &  \small{(0.0095)}  &  \small{(0.1654)}  &  \small{(0.0089)}\\ 
    \end{tabular}
    \end{center}
\end{table}

\begin{table}[h!]
    \begin{center}
     \caption{Model II: Mean and standard deviation (with parentheses) of $\Delta(\mathcal S_{True}, \mathcal S_{Estimated})$ over 100 samples of size $n$ where modern methods are initial methods.}
     \label{tab:res_with_dcov_bcov_and_HSIC_via_hcor_Model_II}
    \begin{tabular}{c|cccccc}
      & DCOV & DCOV-HC & HSIC & HSIC-HC & BCOV & BCOV-HC\\ \hline
 \multirow{2}{*}{n = 100} &  0.1632  &  0.0843  &  0.1077  &  0.0680  &  0.1087  &  0.0660\\ 
 &  \small{(0.076)}  &  \small{(0.0539)}  &  \small{(0.0332)}  &  \small{(0.0354)}  &  \small{(0.0422)}  &  \small{(0.0365)}\\ 
 \multirow{2}{*}{n = 200} &  0.0997  &  0.0411  &  0.0743  &  0.0373  &  0.0396  &  0.0307\\ 
 &  \small{(0.0291)}  &  \small{(0.021)}  &  \small{(0.0219)}  &  \small{(0.0173)}  &  \small{(0.0153)}  &  \small{(0.0155)}\\ 
 \multirow{2}{*}{n = 400} &  0.0657  &  0.0219  &  0.0469  &  0.0204  &  0.0289  &  0.0163\\ 
 &  \small{(0.0549)}  &  \small{(0.011)}  &  \small{(0.0131)}  &  \small{(0.0089)}  &  \small{(0.0983)}  &  \small{(0.007)}\\ 
    \end{tabular}
    \end{center}
    
\end{table}

\begin{table}[h!]
    \begin{center}
     \caption{Model III: Mean and standard deviation (with parentheses) of $\Delta(\mathcal S_{True}, \mathcal S_{Estimated})$ over 100 samples of size $n$ where modern methods are initial methods.}
     \label{tab:res_with_dcov_bcov_and_HSIC_via_hcor_Model_III}
    \begin{tabular}{c|cccccc}
      & DCOV & DCOV-HC & HSIC & HSIC-HC & BCOV & BCOV-HC\\ \hline
 \multirow{2}{*}{n = 100}&  0.1459  &  0.0194  &  0.1223  &  0.0177  &  0.0594  &  0.0201\\ 
 &  \small{(0.0452)}  &  \small{(0.0103)}  &  \small{(0.0399)}  &  \small{(0.0104)}  &  \small{(0.0284)}  &  \small{(0.0116)}\\ 
 \multirow{2}{*}{n = 200} &  0.0873  &  0.0126  &  0.075  &  0.0114  &  0.0204  &  0.0136\\ 
 &  \small{(0.0252)}  &  \small{(0.0067)}  &  \small{(0.0201)}  &  \small{(0.0066)}  &  \small{(0.007)}  &  \small{(0.006)}\\ 
 \multirow{2}{*}{n = 400} &  0.0594  &  0.0087  &  0.0503  &  0.0090  &  0.0114  &  0.009\\ 
 &  \small{(0.0143)}  &  \small{(0.0039)}  &  \small{(0.0127)}  &  \small{(0.0043)}  &  \small{(0.0035)}  &  \small{(0.0041)}\\ 
    \end{tabular}
    \end{center}
\end{table}

\rev{Tables \ref{tab:non-sparse_Model_I}--\ref{tab:non-sparse_Model_III} show the experimental results when $\eta$ is not sparse.
The sparsity of the direction vector does not affect our proposed method as well as the other existing methods.
We changed only the true $\eta$s as follows:
\begin{align*}
\text{Model I}&:  \eta = (1,1,1,1,1,1,1,1,1,1)\trans/\sqrt{10}.\\
\text{Model II}&: \eta = (1,1,1,-1,-1,-1,-1,1,1,-1)\trans/\sqrt{10}.\\
\text{Model III}&: \eta = (3,-1,4,-2,-4,5,1,-3,-5,2)\trans/\sqrt{110}.    
\end{align*}
All other conditions, including the models and the distribution of predictors, remain the same as described in Section \ref{SE : NR}.
}

\begin{table}[h!]
    \begin{center}
    \caption{Model I: Mean and standard deviation (with parentheses) of $\Delta(\mathcal S_{True}, \mathcal S_{Estimated})$ over 100 samples of size $n$ when true $\eta$ is non-sparse.}
       \label{tab:non-sparse_Model_I}
    \begin{tabular}{c|cccccccc}
 &  SIR  &  SIR-HC  &  SAVE  &  SAVE-HC  &  DR  &  DR-HC  &  MAVE  &  MAVE-HC\\ 
\hline 
 \multirow{2}{*}{n = 100}&  0.9462  &  0.8865  &  0.5093  &  0.6764  &  0.2946  &  0.2274  &  0.0534  &  0.0328\\ 
 &  \small{(0.0903)}  &  \small{(0.1755)}  &  \small{(0.185)}  &  \small{(0.3597)}  &  \small{(0.0766)}  &  \small{(0.2351)}  &  \small{(0.015)}  &  \small{(0.0112)}\\ 
 \multirow{2}{*}{n = 200}&  0.8857  &  0.8435  &  0.3138  &  0.1749  &  0.1821  &  0.0678  &  0.0246  &  0.0152\\ 
 &  \small{(0.154)}  &  \small{(0.1636)}  &  \small{(0.1186)}  &  \small{(0.2072)}  &  \small{(0.0442)}  &  \small{(0.0471)}  &  \small{(0.0061)}  &  \small{(0.0046)}\\ 
 \multirow{2}{*}{n = 400}&  0.8804  &  0.6878  &  0.1819  &  0.0564  &  0.1249  &  0.0403  &  0.013  &  0.0085\\ 
 &  \small{(0.1703)}  &  \small{(0.3057)}  &  \small{(0.0538)}  &  \small{(0.0415)}  &  \small{(0.0349)}  &  \small{(0.0325)}  &  \small{(0.0032)}  &  \small{(0.0026)}\\ 
    \end{tabular}
    \end{center}
\end{table}

\begin{table}[h!]
    \begin{center}
    \caption{Model II: Mean and standard deviation (with parentheses) of $\Delta(\mathcal S_{True}, \mathcal S_{Estimated})$ over 100 samples of size $n$ when true $\eta$ is non-sparse.}
        \label{tab:non-sparse_Model_II}
    \begin{tabular}{c|cccccccc}
&  SIR  &  SIR-HC  &  SAVE  &  SAVE-HC  &  DR  &  DR-HC  &  MAVE  &  MAVE-HC\\ 
\hline 
\multirow{2}{*}{n = 100} &  0.1554  &  0.1223  &  0.9922  &  0.7911  &  0.5444  &  0.5769  &  0.0725  &  0.0741\\ 
 &  \small{(0.0431)}  &  \small{(0.0496)}  &  \small{(0.0095)}  &  \small{(0.1455)}  &  \small{(0.1962)}  &  \small{(0.3026)}  &  \small{(0.0259)}  &  \small{(0.0358)}\\ 
\multirow{2}{*}{n = 200} &  0.0992  &  0.0677  &  0.8597  &  0.6362  &  0.4024  &  0.2929  &  0.0427  &  0.0372\\ 
 &  \small{(0.0272)}  &  \small{(0.0307)}  &  \small{(0.2312)}  &  \small{(0.255)}  &  \small{(0.1727)}  &  \small{(0.2632)}  &  \small{(0.0137)}  &  \small{(0.0194)}\\ 
\multirow{2}{*}{n = 400}  &  0.0519  &  0.0623  &  0.0696  &  0.0679  &  0.0704  &  0.0691  &  0.0378  &  0.0594\\ 
 &  \small{(0.0109)}  &  \small{(0.0189)}  &  \small{(0.0181)}  &  \small{(0.0228)}  &  \small{(0.0168)}  &  \small{(0.0194)}  &  \small{(0.0091)}  &  \small{(0.0179)}\\ 
    \end{tabular}
    \end{center}
\end{table}

\begin{table}[h!]
    \begin{center}
    \caption{Model III: Mean and standard deviation (with parentheses) of $\Delta(\mathcal S_{True}, \mathcal S_{Estimated})$ over 100 samples of size $n$ when true $\eta$ is non-sparse.}
    \label{tab:non-sparse_Model_III}
    \begin{tabular}{c|cccccccc}
 &  SIR  &  SIR-HC  &  SAVE  &  SAVE-HC  &  DR  &  DR-HC  &  MAVE  &  MAVE-HC\\ 
\hline 
\multirow{2}{*}{n = 100} &  0.1405  &  0.0871  &  0.9903  &  0.763  &  0.194  &  0.1012  &  0.0371  &  0.0398\\ 
 &  \small{(0.0568)}  &  \small{(0.0497)}  &  \small{(0.0133)}  &  \small{(0.1594)}  &  \small{(0.0831)}  &  \small{(0.098)}  &  \small{(0.0131)}  &  \small{(0.0111)}\\ 
\multirow{2}{*}{n = 200} &  0.0853  &  0.0409  &  0.8053  &  0.5614  &  0.1198  &  0.0542  &  0.0176  &  0.0243\\ 
 &  \small{(0.0289)}  &  \small{(0.018)}  &  \small{(0.2765)}  &  \small{(0.256)}  &  \small{(0.0417)}  &  \small{(0.0273)}  &  \small{(0.0047)}  &  \small{(0.0073)}\\ 
\multirow{2}{*}{n = 400} &  0.0624  &  0.03  &  0.0893  &  0.0373  &  0.0808  &  0.0338  &  0.0109  &  0.0174\\ 
 &  \small{(0.0201)}  &  \small{(0.0116)}  &  \small{(0.0334)}  &  \small{(0.0201)}  &  \small{(0.0244)}  &  \small{(0.0168)}  &  \small{(0.0029)}  &  \small{(0.0057)}\\ 
    \end{tabular}
    \end{center}
\end{table}

\rev{The overall behavior of the experimental results with non-sparse direction vectors is not significantly different from those in Tables \ref{Table : B, normal}--\ref{Table : D, normal}. We observe that, as the sample size increases, our proposed method detects the true directions with high accuracy.
}


\end{document}